\relax
\documentclass[letterpaper]{article} 
\usepackage{aaai}  
\usepackage{times}  
\usepackage{helvet}  
\usepackage{courier}  
\usepackage{url}  
\usepackage{graphicx}  


\usepackage[ruled,linesnumbered,vlined]{algorithm2e} 
\usepackage{amsmath}
\usepackage{amsfonts}
\usepackage{amssymb}
\usepackage{bm}
\usepackage{mathtools}
\usepackage{amsthm}
\usepackage{multirow}
\usepackage{url}
\usepackage{xcolor}
\definecolor{pastelgreen}{rgb}{0.01, 0.75, 0.24}
\definecolor{red(pigment)}{rgb}{0.93, 0.11, 0.14}
\definecolor{bleudefrance}{rgb}{0.19, 0.55, 0.91}


\SetKwComment{Comment}{$\triangleright$\ }{}
\newtheorem{dfn}{Definition}
\newtheorem{thm}{Theorem}

\newtheorem{prop}{Proposition}
\theoremstyle{plain} 
\newtheorem{eg}{Example}

\newcommand\tab[1][1cm]{\hspace*{#1}}
\newcommand{\citet}[1]{\citeauthor{#1}~(\citeyear{#1})}

\nocopyright

\frenchspacing  
\setlength{\pdfpagewidth}{8.5in}  
\setlength{\pdfpageheight}{11in}  
  \pdfinfo{
/Title (2018 Formatting Instructions for Authors Using LaTeX)
/Author (AAAI Press Staff)}
\setcounter{secnumdepth}{2}  
 \begin{document}
%
\title{ A Unified Framework for Planning in \\Adversarial and Cooperative Environments}
\author{ 
Anagha Kulkarni, Siddharth Srivastava and Subbarao Kambhampati \\
School of Computing, Informatics, and Decision Systems Engineering \\ Arizona State University, Tempe, AZ 85281 USA \\
\{anaghak, siddharths, rao\} @ asu.edu \\
}
\maketitle

\begin{abstract}
Users of AI systems may rely upon them to produce plans for achieving desired objectives. Such AI systems should be able to compute obfuscated plans whose execution in adversarial situations protects privacy, as well as legible plans which are easy for team members to understand in cooperative situations. We develop a unified framework that addresses these dual problems by computing plans with a desired level of comprehensibility from the point of view of a partially informed observer. For adversarial settings, our approach produces obfuscated plans with observations that are consistent with at least $\emph{k}$ goals from a set of decoy goals. By slightly varying our framework, we present an approach for goal legibility in cooperative settings which produces plans that achieve a goal while being consistent with at most $\emph{j}$ goals from a set of confounding goals. 
In addition, we show how the observability of the observer can be controlled to either obfuscate or clarify the next actions in a plan when the goal is known to the observer.
We present theoretical results on the complexity analysis of our problems.
We demonstrate the execution of obfuscated and legible plans in a cooking domain using a physical robot Fetch. 
We also provide an empirical evaluation to show the feasibility and usefulness of our approaches using IPC domains.
\end{abstract}

\section{Introduction}

AI systems have become quite ubiquitous. As users, we heavily rely on these systems to plan our day-to-day activities. Since all these systems have logging and tracking abilities, an observer can get access to our data and our actions. Such observers can be of two types: adversarial or cooperative. In adversarial settings, like mission planning, military intelligence, reconnaissance, counterintelligence, etc., protection of sensitive data can be of utmost importance to the agent. In such situations, it is necessary for an AI system to produce plans that reveal neither the intentions nor the activities of the agent. On the other hand, in case of a cooperative observer, the AI system should be able to produce plans that help clarify the intent of the agent. Therefore, it is desirable for an AI system to be capable of computing both obfuscated plans in adversarial settings and legible plans in cooperative settings. 

In this work, we propose a new unifying formalization and algorithms for computing obfuscated plans as well as legible plans. 
In our framework, we consider two agents: an acting agent and an observer. The acting agent has full observability of its activities. The observer is aware of the agent's planning model but has partial observability of the agent's activities. The observations are emitted as a side effect of the agent's activities and are received by the observer. In the following example, we illustrate the influence of an observation model on the belief space of the observer.

\begin{figure}
\label{fig:example2}
\includegraphics[width=\columnwidth]{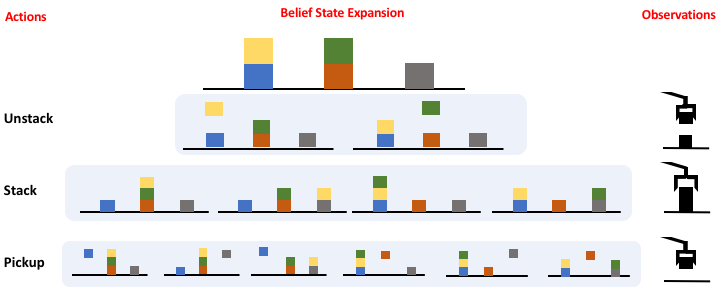}
\caption{A \texttt{blocksworld} example illustrating belief expansion due to observation equivalence induced by the observation model of the observer.}
\end{figure}

\begin{eg}
\label{example:1}
Consider a \texttt{blocksworld} planning domain, where the actions \texttt{unstack}, \texttt{stack}, \texttt{pickup} and \texttt{putdown} are available. The observer can only receive partial observations using sensors on the table and on the gripper. An observation is emitted whenever the gripper is holding a block and indicates whether the block was lifted from the table or from the top of another block. This allows our observer to learn whether or not the current action has the agent holding a block (for example in \texttt{unstack} and \texttt{pickup}, it is holding a block) and whether or not the current action involves a block that was on the table (for example, in \texttt{pickup} and \texttt{putdown}, the blocks are picked up from / putdown on  the table). Thus, our observer can distinguish between types of actions that were performed but is still unaware of which blocks were used. In Figure \ref{fig:example2}, we illustrate the progression of belief states from the observer's point of view based on the observation model defined above. 
\end{eg}

An adversarial observer may be able to use the information gleaned from observations to interfere with or hamper the agent's activities. For example, consider the keystroke timing attack \cite{song2001timing} where the observer retrieves observations about keystroke timing by studying an agent's inter-keystroke timings. 
Through such traffic analysis attacks, the observer can learn the passwords typed by an agent while connecting to a remote machine. On the other hand, in cooperative scenarios, an agent is required to communicate its intentions to the observer as quickly and clearly as possible. For example, consider a robot who is capable of assembling either chairs or tables. A chair has three components: seat, back and legs; and a table has two components: surface and legs. Whenever the robot is holding a component, the observer receives an observation regarding the type of component. In order to notify about a task of say, assembling a chair, the robot can start with the seat or the back components rather than with the legs to make its objectives clearer to the observer.

In this work, we develop a coherent set of notions for goal obfuscation and goal legibility. Our approach computes the solutions for each of these problems using the variants of a common underlying algorithm. Our approach assumes offline settings, where the observer receives the observations after the agent has finished executing a plan. In the case of a goal obfuscation problem, there exist multiple decoy goals and one true goal. The observer is unaware of the agent's true goal, and the objective is to generate a plan solution without revealing it. Our solution ensures that \emph{at least} $k$ goals are possible at the end of the observation sequence. On the other hand, in the goal legibility problem, there exist multiple confounding goals and a true goal. Here the objective is to reveal \emph{at most} $j$ goals to the observer. Our solution ensures that at most $j$ goals are possible at the end of the observation sequence. 
We also consider a variant of obfuscation and legibility where the adversary knows the goal of the agent and wants to obfuscate or reveal the next action in the plan to achieve that goal, we call these problems plan obfuscation and plan legibility respectively. For plan obfuscation, the objective is to generate a plan solution with an observation sequence that is consistent with at least $\ell$ diverse plans. On the other hand, for plan legibility, the objective is to generate a plan solution that is consistent with at least $m$ similar plans.  

In the following sections, we present a common framework that encapsulates the planning problems discussed above. And thereafter, we discuss each of the problems in detail. We also provide a theoretical and empirical analysis of the value and scope of our approaches.

\section{Controlled Observability Planning Problem}

\subsection{Classical Planning}

A classical planning problem can be defined as a tuple $\mathcal{P}= \langle \mathcal{F}, \mathcal{A} , \mathcal{I}, G \rangle $, where $\mathcal{F}$, is a set of fluents, $\mathcal{A}$, is a set of actions. A state $s$ of the world is an instantiation, $\mathcal{F}^i$ of $\mathcal{F}$. The initial state $\mathcal{I}$ is the  instantiation of all fluents in $\mathcal{F}$ and the goal $G$ is a subset of instantiated fluents in $\mathcal{F}$. 
Each action $a \in \mathcal{A}$ is a tuple of the form $\langle pre(a), add(a), delete(a), c(a) \rangle$ where $c(a)$ denotes the cost of an action, $pre(a) \subseteq \mathcal{F}$ is a set of preconditions for the action $a$, $add(a) \subseteq \mathcal{F}$ is a set of positive effects and $delete(a) \subseteq \mathcal{F}$ is a set of negative effects, i.e., $\Gamma(s, a) \models \bot$ if $s \not\models pre(a)$; else $\Gamma(s, a) \models s \cup add(a) \setminus delete(a)$ where $\Gamma(\cdot)$ is the transition function. The solution to $\mathcal{P}$ is a \emph{plan} or a sequence of actions $\pi = \langle a_1, a_2, \ldots, a_n \rangle$, such that, $\Gamma(\mathcal{I},\pi) \models G$, i.e., starting from the initial state sequentially executing the actions lands the agent in a goal state. The cost of the plan, $c(\pi)$, is summation of the cost of all the actions in the plan $\pi$, $c(\pi) = \sum_{a_i\in\pi}c(a_i)$.

\subsection{Problem Setting}

We now introduce a general planning problem framework that will be used to define adversarial and cooperative cases in the following sections. The controlled observability problem involves an acting agent and an observer. 

\begin{dfn} 
A \textbf{controlled observability planning problem} is a tuple, $\mathcal{P_{CO}} = \langle \mathcal{D}, \mathcal{G}, \Omega, \mathcal{O} \rangle$, where,

\begin{itemize}
\item $\mathcal{D} = \langle \mathcal{F}, \mathcal{A}, \mathcal{I} \rangle$ is the planning domain of the agent. 
\item $\mathcal{G} = \{G_1 \cup G_2 \ldots \cup G_{n-1} \cup G_A\}$ is a set of candidate goal conditions, each defined by subsets of fluent instantiations, where $G_A$ is the true goal of the agent.
\item $\Omega = \{o_i | i = 1, \ldots, m \}$ is a set of $m$ observations that can be emitted as a result of the action taken and the state transition. 
\item $\mathcal{O} : (\mathcal{A} \times \mathcal{S}) \rightarrow \Omega $ is a many-to-one observation function which maps the action taken and the next state reached to an observation in $\Omega$. That is to say, the observations are deterministic, each $\langle a, s' \rangle$ pair is associated with a single observation but multiple pairs can be mapped to the same observation. 
\end{itemize}
\end{dfn}

The observer has access to $\mathcal{P_{CO}}$, but is unaware of the true goal of the agent. Also, the observer does not have access to the actions performed by the agent, instead receives the observations corresponding to the plan executed by the agent. The observation function can be seen as a sensor model, as modeled in several prior works \cite{geffner2013concise,bonet2014belief,keren2016privacy}. For every action taken by the agent and an associated state transition, the observer receives an observation. This observation might be consistent with multiple action-state pairs because of the many-to-one formulation of $\mathcal{O}$. Therefore, the observer operates in the belief space. The agent takes the belief space of the observer into account in its planning process, so as to control the observability of the observer. 
Our formulation defines an offline scenario where the observer receives all the observations once the plan has been executed by the agent. 

\subsection{Observer's Belief Space}

The observer may use its observations of the agent's activity to maintain a \emph{belief state}, or the set of possible states consistent with the observations.

\begin{dfn}
\label{dfn:belief}
A \textbf{belief}, $b_i$, induced by observation, $o_i$, emitted by action, $a_i$, and resulting state, $s_i$, is, $b_i = \{ \hat{s}_i ~|~\exists \hat{a}_i, \  \mathcal{O}(\hat{a}_i, \hat{s}_i) = o_i \land \mathcal{O}(a_i, s_i) = o_i \}$.
\end{dfn}

Whenever a new action is taken by the agent, the observer's belief can be updated as follows:

\begin{dfn} A \textbf{belief update}, $b_{i+1}$ for belief $b_{i}$ is defined as, $b_{i+1} = update(b_i,  o_{i+1}) = \{ \hat{s}_{i+1}~|~\exists \hat{s}_{i}, \exists \hat{a}_{i+1},\  \Gamma(\hat{s}_{i}, \hat{a}_{i+1}) \models \hat{s}_{i+1} \land \hat{s}_{i} \in b_i \land \mathcal{O}(\hat{a}_{i+1}, \hat{s}_{i+1}) = o_{i+1}  \}$.
\end{dfn}

A sequence of such belief updates gives us the observer's belief sequence that is consistent with a sequence of observations emitted by the agent.

\begin{dfn} A \textbf{belief sequence} induced by a plan p starting at state $s_0$, BS(p, $s_0$), is defined as a sequence of beliefs $\langle b_o, b_1, \ldots, b_n \rangle$ such that there exist $o_0, o_1, o_2, \ldots, o_n \in \Omega$ where,
\begin{itemize}
\item $o_i = \mathcal{O}(a_i, s_i)$
\item $b_0 = \{\hat{s}_0 | \mathcal{O}(\emptyset, s_0) = o_0 \land \mathcal{O}(\emptyset, \hat{s}_0) = o_0\}$
\item $b_{i+1} = update(b_i, o_{i+1})$
\end{itemize}
\end{dfn}

The objective of the agent is to generate a desired belief in the observer's belief space, such that the last belief in the induced belief sequence satisfies goal conditions from the candidate goal set $\mathcal{G}$ including $G_A$. 




\subsection{Variants of $\mathcal{P_{CO}}$}

We now discuss the two major variants of $\mathcal{P_{CO}}$ namely, goal obfuscation and goal legibility planning problems. 

\subsubsection{Goal Obfuscation}

The adversary is aware of agent's candidate goal set but is unaware of agent's true goal. The aim of goal obfuscation is to hide this true goal from the observer. This is done by taking actions towards agent's true goal, such that, the corresponding observation sequence exploits the observer's belief space in order to be consistent with multiple goals. 

\begin{dfn} 
A \textbf{goal obfuscation planning problem}, is a $\mathcal{P_{CO}}$, where, $\mathcal{G} = \{G_A \cup G_1 \cup \ldots \cup G_{n-1} \}$, is the set of $n$ goals where $G_A$ is the true goal of the agent, and $G_1, \ldots, G_{n-1}$ are decoy goals. 
\end{dfn}

A solution to a goal obfuscation planning problem is a \textit{k-ambiguous} plan. The objective here is to make the observation sequence consistent with at least $k$ goals, out of which $k-1$ are decoy goals, such that, $k \leq n$. These $k-1$ goals can be chosen by the robot so as to maximize the obfuscation. 

\begin{dfn} 
\label{def:obf}
A plan, $\pi_k$, is a \textbf{k-ambiguous plan}, if $\Gamma(\mathcal{I}, \pi_k) \models G_A$ and the last belief, $b_n \in BS(\pi_k, \mathcal{I})$, satisfies the following,  $|b_n \cap \mathcal{G}| \geqslant k$, where $1 \geqslant k \geqslant n$.
\end{dfn}

\begin{dfn} An observation sequence $O_k = \langle o_1, \ldots, o_n \rangle$ is \textbf{k-ambiguous observation sequence} if it is an observation sequence emitted by a k-ambiguous plan. 
\end{dfn}

A \textit{k-ambiguous} plan achieves at least $k$ goals in the last belief of the observation sequence. 

\subsubsection{Goal Legibility}

The aim of goal legibility is to take goal-specific actions which help the observer in deducing the robot's goal. This can be useful in cooperative scenarios where the robot wants to notify the observer about its goal without explicit communication. This case is exactly opposite of the obfuscation case. 

\begin{dfn}
A \textbf{goal legibility planning problem} is a $\mathcal{P}_{CO}$, where, $\mathcal{G} = \{ G_A \cup G_1 \cup \ldots \cup G_{n-1} \}$ is the set of $n$ goals where $G_A$ is the true goal of the agent, and $G_1, \ldots, G_{n-1}$ are confounding goals. 
\end{dfn}

The objective here is to generate legible plans so as to reveal at most $j$ goals. Here we ensure that the plans are consistent with \emph{at most} $j$ goals so as to minimize the number of goals in the observer's belief space. 

\begin{dfn} 
\label{def:leg}
A plan, $\pi_j$, is a \textbf{j-legible plan}, if $\Gamma(\mathcal{I}, \pi_j) \models G_A$ and the last belief, $b_n \in BS(\pi_j, \mathcal{I})$, satisfies the following, $|b_n \cap \mathcal{G}| \leqslant j$, where $1 \geqslant j \geqslant n$.
\end{dfn}

The definition of \emph{j-legible} observation sequence follows that of \emph{k-ambiguous} case.


\subsection{Complexity Analysis}

In this section, we discuss the complexity results for $\mathcal{P_{CO}}$. Given the Definitions \ref{def:obf} and \ref{def:leg} of goal obfuscation and goal legibility plan solutions, we prove that the plan existence problem for $\mathcal{P_{CO}}$ is EXPSPACE-complete.

\begin{thm}
The plan existence problem for a controlled observability planning problem is EXPSPACE-hard. 
\end{thm}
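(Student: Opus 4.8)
The plan is to prove hardness by a polynomial reduction from \emph{conformant} (unobservable) planning, i.e.\ deciding whether a single action sequence drives \emph{every} state of a compactly described initial set into the goal. This problem is known to be EXPSPACE-complete, and it is a natural source because its difficulty stems from exactly the feature that governs $\mathcal{P_{CO}}$: one must reason about an exponentially large set of states (the conformant belief / the observer's belief) that is carried along the plan. My goal is to make the observer's belief in a constructed $\mathcal{P_{CO}}$ instance replay, step for step, the belief progression of a given conformant instance, and then read off the conformant success criterion from the goal-cardinality condition of Definition~\ref{def:leg}.

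First I would fix the construction. Given a conformant instance $\langle \mathcal{F}, \mathcal{A}, \Phi_I, G\rangle$, I build a $\mathcal{P_{CO}}$ over the same fluents and (essentially) the same actions. The agent's true initial state $\mathcal{I}$ is taken to be any fixed state satisfying $\Phi_I$, and I define the observation function on the empty action so that $\mathcal{O}(\emptyset, s)$ emits one designated symbol $o_0$ \emph{exactly} for the states with $s \models \Phi_I$; by the definition of the belief sequence this makes the initial belief $b_0$ equal to the conformant initial set. The key move is to define $\mathcal{O}(a, s')$ to depend only on a label of the action $a$ and not on $s'$. With such action-revealing, state-hiding observations, the existential quantifier ``$\exists \hat{a}_{i+1}$'' in the belief-update rule collapses to the single action the agent actually executed, so the update reduces to $b_{i+1} = \{\,\Gamma(s, a_{i+1}) \mid s \in b_i\,\}$, which is precisely the conformant belief image under $a_{i+1}$. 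Hence for any action sequence the agent chooses, the observer's final belief $b_n$ is exactly the conformant belief obtained by applying that sequence to every state of $\Phi_I$.

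Next I would install the goal gadget and prove the equivalence. Using a terminal test action together with two auxiliary marker fluents (one set only from states satisfying $G$, the other set only from states violating $G$, with the markers made observationally indistinguishable), I encode two candidate goals so that the cardinality condition $|b_n \cap \mathcal{G}| \leq 1$ of a $1$-legible plan holds if and only if \emph{every} state of the final belief satisfies $G$. Since the agent's real execution from $\mathcal{I}$ must independently reach the true goal $G_A$, a $1$-legible solution exists exactly when some action sequence carries all of $\Phi_I$ into $G$ --- that is, exactly when the conformant instance is solvable. Both directions are then immediate, and the construction is clearly polynomial, so goal legibility, and therefore the plan-existence problem for $\mathcal{P_{CO}}$, is EXPSPACE-hard; the same belief-simulation device yields a matching lower bound for the goal-obfuscation variant.

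The hard part will be pinning down the belief progression. The belief-update rule quantifies existentially both over predecessor states and over actions, so without the action-revealing observation function the belief would balloon to the set of all states reachable by \emph{any} action, destroying the correspondence with conformant progression. I must therefore argue carefully that (i) the action label is recoverable from the observation while the state is not, (ii) every action used in a candidate plan is applicable in every state of the current belief, since otherwise some states silently drop out of $b_i$ and the ``for all initial states'' semantics of conformant planning is weakened, and (iii) the initial observation carves out precisely $\Phi_I$. The second genuine point to verify is that the at-most-$j$ cardinality constraint on goals matched by the belief faithfully captures the universal quantifier ``all belief states reach $G$'' of conformant planning.
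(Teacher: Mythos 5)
Your proposal reduces from the same source problem as the paper---unobservable deterministic (conformant/NOD) planning, which is EXPSPACE-complete---but the reduction mechanics are genuinely different. The paper's reduction simply empties the observation machinery ($\Omega = \emptyset$, $\mathcal{O} = \emptyset$) and lets the \emph{agent itself} carry the initial-state uncertainty: $\mathcal{I}_C$ is taken to be the set of possible NOD initial states, so the conformant ``works from every initial state'' requirement is absorbed into the agent's own goal-achievement condition $\Gamma(\mathcal{I}_C, \pi) \models G_C$, and the belief-cardinality test with $k=j=1$ plays only a supporting role. You instead keep the agent's initial state as a single known state (as the framework's definition literally requires), recreate the conformant initial set inside the \emph{observer's} belief via the $\mathcal{O}(\emptyset,\cdot)$ trick, force that belief to replay conformant progression with action-revealing/state-hiding observations, and then encode the universal success criterion through the at-most-$j$ test of Definition~\ref{def:leg} with a marker-fluent gadget. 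Your route is more faithful to the framework as stated and makes the hardness genuinely flow through the observer's belief rather than through the agent's own execution; the cost is extra gadgetry whose details (actions inapplicable in part of the belief, expressing the ``violates $G$'' test with STRIPS-style actions sharing one observation) you flag but still must discharge---these are standard compilations, so they are not fundamental obstacles.

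The one genuine gap is your closing claim that ``the same belief-simulation device yields a matching lower bound for the goal-obfuscation variant.'' It does not transfer. In your construction the agent's execution is deterministic from a known initial state and every belief state evolves independently, so ``goal $G_i$ is present in the final belief'' is an \emph{existential} condition---a disjunction over the initial states in $b_0$---and the $k$-ambiguity requirement $|b_n \cap \mathcal{G}| \geqslant k$ of Definition~\ref{def:obf} is monotone in the belief: enlarging the belief can only help. Such conditions express OR-type properties of the conformant image, whereas conformant solvability is an AND over all initial states; concretely, with your marker gadget and $\mathcal{G} = \{G_A, G_1\}$, a $2$-ambiguous plan exists precisely when the prefix is \emph{not} conformant (some belief state acquires the bad marker) while still working for the true state, which is the opposite of what you need. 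This is exactly the point where the paper's otherwise looser reduction does something yours does not: by making the agent's own initial state uncertain, the universal quantifier sits inside $\Gamma(\mathcal{I}_C,\pi) \models G_C$, a condition shared by both the obfuscation and legibility definitions. Since the theorem is meant to cover both variants, you should either restrict your claim to the legibility reading---which alone does establish the stated EXPSPACE-hardness of $\mathcal{P_{CO}}$ plan existence---or adopt the paper's device of an uncertain agent initial state for the obfuscation case.
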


\begin{proof}
To show that the plan existence problem for $\mathcal{P_{CO}}$ is EXPSPACE-hard, we will show that the NOD (No-Observability Deterministic) planning problem is reducible to $\mathcal{P_{CO}}$. The plan existence problem for NOD has been shown to be EXPSPACE-complete \cite{haslum1999some,rintanen2004complexity}. 


Let $\mathcal{P}_{N} = \langle \mathcal{F}_{N}, \mathcal{A}_{N}, \mathcal{I}_{N}, G_{N}, \mathcal{V} \rangle$ be a NOD planning problem, where, $\mathcal{F}_{N}$ is the set of fluents (or Boolean state variables), such that, state $s$ is an instantiation of $\mathcal{F}_{N}$. $\mathcal{A}_{N}$ is a set of actions, such that, when an action $a \in \mathcal{A}_{N}$ is applied to a state, $s_i$, a deterministic transition to the next state occurs, $\Gamma(s_{i}, a) \models s_{i+1}$. $\mathcal{I}$ and $G$ are Boolean formulae that represent sets of initial and goal states. $\mathcal{V} = \emptyset$ is the set of observable state variables. Since the underlying system state is unknown, the deterministic transition function does not reveal the hidden state. $\mathcal{P}_N$ can be expressed as a $\mathcal{P_{CO}}$ problem, $\mathcal{P}_C = \langle \mathcal{D}_C, G_C, \Omega_C, \mathcal{O}_C \rangle$, where, $\mathcal{D}_C = \{ \mathcal{F}_{C}, \mathcal{A}_{C}, \mathcal{I}_{C}\}$, such that $\mathcal{I}_{C}$ is a set of possible initial states, $G_C$ is a subset of instantiations in $\mathcal{F}_{C}$, $\Omega = \emptyset$ and $\mathcal{O} = \emptyset$.


Suppose $\pi_{\mathcal{P}_C} = \langle a_1, \ldots, a_r \rangle$ is a plan solution to $\mathcal{P}_C$, such that, $\Gamma(\mathcal{I}_C, \pi_{\mathcal{P}_C}) \models G_C$ and the last belief $b_r \in BS(\pi_{\mathcal{P}_C}, \mathcal{I}_C)$ satisfies $|b_r \cap G_C| = 1$. Then according to the definition of $\mathcal{P}_N$, the plan $\pi_{\mathcal{P}_C}$ has a last belief, such that, $\exists s_r \in b_r, s_r \models G_C$ and therefore solves $\mathcal{P}_N$.

Conversely, suppose $\pi_{\mathcal{P}_N} = \langle a_1, \ldots, a_q \rangle$ is a plan solution to $\mathcal{P}_N$, such that, $\Gamma (\mathcal{I}_N, \pi_{\mathcal{P}_N}) \models G_{N}$. Let $B_q$ be the belief associated with the last action in $\pi_{\mathcal{P}_N}$. Since it achieves the goal, we can say that $|B_q \cap G| = 1$. According to Definitions \ref{def:obf}, \ref{def:leg}, for $k=j=1$, $B_q$ satisfies the condition. Therefore $\pi_{\mathcal{P}_N}$ is a solution to $\mathcal{P}_C$. 
\end{proof}

\begin{thm}
The plan existence problem for a controlled observability planning problem is EXPSPACE-complete. 
\end{thm}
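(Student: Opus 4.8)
Since the preceding theorem already establishes that the plan existence problem for $\mathcal{P_{CO}}$ is EXPSPACE-hard, the plan is to prove the matching upper bound, namely membership in EXPSPACE; combining the two bounds then yields completeness immediately. The key observation is that, although the observer reasons in belief space, a belief is merely a set of world states, and the number of world states is at most $2^{|\mathcal{F}|}$. Hence any single belief can be stored explicitly as a bit-vector of length $2^{|\mathcal{F}|}$, i.e.\ in space exponential in the input. Likewise, a single world state $s$ is an instantiation of $\mathcal{F}$ and occupies only polynomial space, and both the transition function $\Gamma$ and the observation function $\mathcal{O}$ can be evaluated in (at most) exponential space. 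Thus the two objects a candidate solution must track, the true state and the induced belief, both fit in exponential space, and the $update$ operation can be computed within the same bound.

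The algorithm I would give is a nondeterministic exponential-space procedure that guesses the plan one action at a time. It maintains (i) the current true state $s_i$, initialized to $\mathcal{I}$, in polynomial space, and (ii) the current belief $b_i$, initialized to $b_0$, in exponential space. At each step it nondeterministically selects an applicable action $a_{i+1}$, sets $s_{i+1} = \Gamma(s_i, a_{i+1})$, computes the emitted observation $o_{i+1} = \mathcal{O}(a_{i+1}, s_{i+1})$, and replaces $b_i$ by $update(b_i, o_{i+1})$. It accepts as soon as $s_i \models G_A$ and the final belief satisfies the relevant belief-space condition, namely $|b_i \cap \mathcal{G}| \geqslant k$ for the obfuscation variant (Definition~\ref{def:obf}) or $|b_i \cap \mathcal{G}| \leqslant j$ for the legibility variant (Definition~\ref{def:leg}); this test is a single pass over the exponential-size belief and the goal set.

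To guarantee the search halts within exponential space, I would argue that a configuration, the pair $\langle \text{true state}, \text{belief}\rangle$, determines all future options, so if a solution exists there is one in which no configuration repeats (any repeated configuration can be excised and the suffix spliced in without changing the final configuration). The number of configurations is bounded by $2^{|\mathcal{F}|} \cdot 2^{2^{|\mathcal{F}|}}$, which is doubly exponential, so the machine can carry a counter of roughly $2^{|\mathcal{F}|}$ bits, exponential in the input, and reject once that many steps elapse without acceptance. The entire procedure therefore runs nondeterministically in exponential space, placing the problem in NEXPSPACE, and by Savitch's theorem $\mathrm{NEXPSPACE} = \mathrm{EXPSPACE}$. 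Together with the hardness result this gives EXPSPACE-completeness. The main obstacle I anticipate is the bookkeeping for the length bound: I must verify carefully that excising a repeated configuration preserves both the attainment of $G_A$ along the true trajectory and the final belief, and that the counter, update routine, and acceptance test all genuinely stay within exponential space rather than slipping to doubly exponential.
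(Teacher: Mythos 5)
Your proposal is correct and takes essentially the same route as the paper's own proof: a nondeterministic search through belief space, a doubly exponential bound on solution length obtained by removing loops (repeated configurations), membership in NEXPSPACE, and Savitch's theorem to collapse NEXPSPACE to EXPSPACE, which together with the preceding hardness theorem yields completeness. If anything, your write-up is more careful than the paper's, which argues via a step count of $2^{2^{|\mathcal{F}|}}$ and leaves implicit the point you make explicit --- that a single belief (as a bit-vector over at most $2^{|\mathcal{F}|}$ states), the update routine, and the step counter all fit within exponential space.
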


\begin{proof}

In $\mathcal{P_{CO}}$, the planner operates in belief space and the state space is bounded by $2^{2^{|\mathcal{F}|}}$, where $|\mathcal{F}|$ is the cardinality of the fluents (or Boolean state variables). If there exists a plan solution for $\mathcal{P_{CO}}$, it must be bounded by $2^{2^{|\mathcal{F}|}}$ in length. Any solution longer in length must have loops, which can be removed. Therefore, by selecting actions non-deterministically, the solution can be found in at most $2^{2^{|\mathcal{F}|}}$ steps. Hence, the plan existence problem for $\mathcal{P_{CO}}$ is in NEXPSPACE. By Savitch's theorem \cite{savitch1970relationships}, NEXPSPACE = EXPSPACE. Therefore, the plan existence problem for $\mathcal{P_{CO}}$ is EXPSPACE-complete. 
\end{proof}

\subsection{Algorithm for Plan Computation}
\label{section:algo}

\begin{algorithm}[!t]
\scriptsize
\SetAlgoLined
\KwIn{$\mathcal{P_{CO}} = \langle \mathcal{D}, \mathcal{G}, \Omega, \mathcal{O} \rangle$}
\KwOut{plan solution $\pi_{\mathcal{P_{CO}}}$, observation sequence, $O_{\mathcal{P_{CO}}}$}

$\Delta \gets 1 $ \Comment*[f]{Counter} \\
$\Delta\_limit \gets False$ \Comment*[f]{Delta cardinality flag} \\

\While{$\Delta \leqslant |\mathcal{S}|$}{ \label{line:outer}

$s_{\Delta} \gets \{ \mathcal{I} \}$ \Comment*[f]{Initial state}\\
$open \gets \texttt{Priority\_Queue()}$ \Comment*[f]{Open list} \\
$closed \gets \{\}$ \Comment*[f]{Closed list} \\
$b_0 \gets \{ \mathcal{O}(\emptyset, s_{\Delta}) \}$ \Comment*[f]{Initial belief} \\
$open.push(\langle \mathcal{I}, b_0 \rangle, priority = 0 )$ \\

\If{$ |s_{\Delta}| = \Delta $}{
$\Delta\_limit \gets True$ \\
}

\While{$open \neq \emptyset$ }{ \label{line:inner}
$\langle s_{\Delta}, b \rangle \gets open.pop() $\\
\If{$ \neg \Delta\_limit$}{
\For{$\hat{s} \in b \setminus s_{\Delta} $ }{
$s_{\Delta} \gets s_{\Delta} \cup \hat{s}$ \\
\If{$ |s_{\Delta}| = \Delta $}{
$\Delta\_limit \gets True$ \\
$\textbf{break}$
}
}
}
$closed \gets closed \cup s_{\Delta} $ \\
\If{ $\langle s_{\Delta}, b \rangle \models$ \textsc{GOAL-TEST}($\mathcal{G}$)}{ \label{line:goal} 
$\textbf{return}~\pi_{\mathcal{P_{CO}}}, O_{\mathcal{P_{CO}}}$
}

\For{$s_{\Delta}^{\prime} \in successors(s_{\Delta})$}{
$o \gets \mathcal{O}(a, s_{\Delta}^{\prime})$ \\
$b^{\prime} \gets$ Belief-Generation($b, a, o$) \\
$h(s_{\Delta}^{\prime}) \gets$ \textsc{HEURISTIC-FUNCTION}$(s_{\Delta}^{\prime}, b^{\prime})$ \label{line:heuristic} \\
\uIf{$s_{\Delta}^{\prime} \notin$ open \textbf{and} $s_{\Delta}^{\prime} \notin$ closed}{
$open.push(\langle s_{\Delta}^{\prime}, b^{\prime} \rangle , h(s_{\Delta}^{\prime}))$ \\
} 
\ElseIf{$h(s_{\Delta}^{\prime}) < h^{prev}(s_{\Delta}^{\prime})$ }{
\uIf{$s_{\Delta}^{\prime} \notin$ open}{
$closed \gets  closed \setminus s_{\Delta}^{\prime} $\\
$open.push(\langle s_{\Delta}^{\prime}, b^{\prime} \rangle , h(s_{\Delta}^{\prime})$) \\
}
\Else{
update priority from $h^{prev}(s_{\Delta}^{\prime})$ to $h(s_{\Delta}^{\prime})$ 
}
}
}
}
$\Delta \gets \Delta + 1$ \\
$\Delta\_limit \gets False$ \\
}

\textbf{procedure} Belief-Generation($b$, $a$, $o$) \\
$b^{\prime} \gets \{ \}$ \\
\For{$\hat{s} \in b$}{
\For{$\hat{a} \in \mathcal{A}$}{
\If{$\hat{s} \models pre(\hat{a})$ \textbf{and} $\mathcal{O}(\hat{a}, \Gamma(\hat{s}, \hat{a})) = o $}{
$b^{\prime} \gets  b^{\prime} \cup \Gamma(\hat{s}, \hat{a}) $ \\
}
}
}
\textbf{return} $b^{\prime}$
\caption{Plan Computation}
\label{procedure:belief}
\end{algorithm}

We present the details of a common algorithm template used by our formulations in Algorithm \ref{procedure:belief}. In Section \ref{section:compute}, we show how we customize the goal-test (line \ref{line:goal}) and the heuristic function (line \ref{line:heuristic}) to suit the needs of each of our problem variants. There are two loops in the algorithm: the outer loop (line \ref{line:outer}) runs for different values of $\Delta = \{ 1, 2, \ldots, |\mathcal{S}|\}$; while the inner loop (line \ref{line:inner}) performs search over the state space of size ${|\mathcal{S}| \choose \Delta}$. These loops ensure the complete exploration of the belief space. 

For each outer iteration, $s_{\Delta}$ is augmented with elements of the belief state until the cardinality of $s_{\Delta}$ is equal to the value of $\Delta$. In the inner loop, we run GBFS over the state space of $s_{\Delta}$. For each successor node in the open list, the belief induced by an observation is updated. The heuristic value of a state is computed using a plan graph \cite{blum1997fast} level based heuristic, such as set-level heuristic \cite{nguyen2002planning}. The plan graph data structure contains information about the positive and the negative interactions between the sets of propositions and actions. We use set-level plan graph heuristic to guide the search. To get the set-level cost, the plan graph is populated with a state, $s$ (search node), and it is expanded until one of the following holds $\bf{(1)}$ the goal is reachable, that is, the goal propositions are present in a proposition layer and are mutex-free pairs, or $\bf{(2)}$ the graph levels off, that is, it cannot be expanded further. If the goal is not reachable before the graph levels off then it cannot be achieved by any plan. In this case, the heuristic cost is $\infty$. Else, when the goal is reachable and the goal propositions are pairwise mutex-free, the heuristic value is the index of the first plan graph layer that contains it. 

\begin{prop}
Algorithm \ref{procedure:belief} necessarily terminates in finite number of $|\mathcal{S}|$ iterations, such that, the following conditions hold: \\

\noindent (\textbf{Completeness}) Algorithm \ref{procedure:belief} explores the complete solution space of $\mathcal{P_{CO}}$, that is, if there exists a $\pi_{\mathcal{P_{CO}}}$ that correctly solves $\mathcal{P_{CO}}$, it will be found. \\

\noindent (\textbf{Soundness}) The plan, $\pi_{\mathcal{P_{CO}}}$, found by Algorithm \ref{procedure:belief} correctly solves $\mathcal{P_{CO}}$ as ensured by the corresponding goal-test.
\end{prop}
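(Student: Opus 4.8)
The plan is to prove the three assertions separately: finite termination in at most $|\mathcal{S}|$ outer iterations, completeness, and soundness. For termination I would argue purely combinatorially. The outer counter $\Delta$ is initialized to $1$, increased by exactly one per iteration, and the loop guard enforces $\Delta \leqslant |\mathcal{S}|$; hence there are at most $|\mathcal{S}|$ outer iterations. For each fixed $\Delta$, the inner GBFS searches a finite space, namely the reachable search nodes whose tracked component $s_{\Delta}$ is a subset of the state space of cardinality at most $\Delta$, of which there are only finitely many. The closed list records expanded nodes and the heuristic of line \ref{line:heuristic} takes values in $\mathbb{N} \cup \{\infty\}$, so the reopening rule (triggered only when $h < h^{prev}$) can fire on any node only finitely often. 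Therefore the open list empties after finitely many pops and the inner loop halts, giving termination within $|\mathcal{S}|$ outer iterations.

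For completeness I would show that the union over $\Delta = 1, \ldots, |\mathcal{S}|$ of the node spaces explored covers every reachable belief sequence, with the decisive iteration being $\Delta = |\mathcal{S}|$. In that iteration $s_{\Delta}$ is augmented (by the inner \textbf{for} loop over $b \setminus s_{\Delta}$) until it holds every state of the current belief, so each search node carries the exact belief produced by Belief-Generation. I would then verify that Belief-Generation computes precisely the belief update of Definition \ref{dfn:belief} and of the belief-sequence definition: it collects every successor $\Gamma(\hat{s}, \hat{a})$ of a state $\hat{s} \in b$ reached by a precondition-satisfying action $\hat{a}$ emitting the observed $o$, and no other state. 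Since GBFS over a finite space with a closed list that reopens on strict heuristic improvement is exhaustive, every belief reachable from $b_0$ by a causally consistent action sequence with matching observations is generated. Hence, for any solution $\pi^{*}$ with belief sequence $\langle b_0, \ldots, b_r \rangle$ meeting the relevant goal test, the node carrying $b_r$ is eventually popped and \textsc{GOAL-TEST} succeeds, so a solution (this one, or an earlier one found at a smaller $\Delta$) is returned.

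For soundness I would argue directly from the structure of the return condition. The algorithm returns only at line \ref{line:goal}, i.e.\ only when $\langle s_{\Delta}, b \rangle \models$ \textsc{GOAL-TEST}$(\mathcal{G})$; by construction this test is passed only by a node whose belief satisfies the defining cardinality constraint, $|b_n \cap \mathcal{G}| \geqslant k$ for obfuscation (Definition \ref{def:obf}) or $|b_n \cap \mathcal{G}| \leqslant j$ for legibility (Definition \ref{def:leg}), and only for a node reached from $\mathcal{I}$ along the recorded action sequence, so $\Gamma(\mathcal{I}, \pi_{\mathcal{P_{CO}}}) \models G_A$ holds as well. Because Belief-Generation computes exactly the belief of $\pi_{\mathcal{P_{CO}}}$, the tested belief is the true last belief of the returned plan, and the returned $\pi_{\mathcal{P_{CO}}}$ genuinely satisfies the corresponding definition.

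The hard part will be the completeness argument, and specifically justifying that the bounded-cardinality surrogate $s_{\Delta}$ — on which the closed list and the open/closed membership checks are keyed — never prunes a node that lies on a solution path. I would address this by establishing the invariant that at $\Delta = |\mathcal{S}|$ the surrogate coincides with the full belief $b$, so no information is lost and the keying is faithful; and by confirming that the reopening rule on lines for $h < h^{prev}$ truly yields an exhaustive search rather than merely eventual heuristic improvement, so that the greedy, non-optimal ordering of GBFS can delay but never prevent the goal node from being popped. Pinning down this exhaustiveness-under-reopening property of the pseudocode is the delicate step on which the whole completeness claim rests.
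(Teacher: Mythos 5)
Your proposal is correct and follows essentially the same decomposition as the paper's own argument: termination because the outer counter $\Delta$ is bounded by $|\mathcal{S}|$ and each inner search is finite, completeness because the outer $\Delta$-loop forces exhaustive exploration of the belief space, and soundness delegated entirely to the goal test enforcing Definitions \ref{def:obf} and \ref{def:leg}. The only difference is rigor: the paper's proof is a three-sentence sketch that simply asserts ``the outer loop ensures that all the paths in the search space are explored,'' whereas you supply the finite-reopening argument for termination and explicitly flag the genuine delicacy --- that the closed list is keyed on the bounded-cardinality surrogate $s_{\Delta}$ rather than the full belief, so exhaustiveness under the $h < h^{prev}$ reopening rule must be justified --- a point the paper never acknowledges, let alone resolves.
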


Algorithm \ref{procedure:belief} terminates either when a plan is found or after running the outer loop for $|\mathcal{S}|$ iterations. The outer loop ensures that the all the paths in the search space are explored. And the goal tests of both of the problem variants ensure that the solutions are correct with respect to Definitions \ref{def:obf} and \ref{def:leg}. 

The increase in cardinality of $s_{\Delta}$ can lead to increase in the search overhead. In our implementation, we run only the first iteration of the outer loop. Most of the problem instances can be solved in the first iteration itself. 



\section{Plan Computation}
\label{section:compute}
In this section, we present instantiations of modules presented in Algorithm \ref{procedure:belief} for goal obfuscation and legibility. 

\subsection{Computing Goal Obfuscated Plans}

\paragraph{Goal test}

We ensure that the solution plan does not end unless all $k$ goals occur in the belief state. In order to achieve this, the goal condition checks whether the goal has been achieved in the agent's true state and also whether the $k-1$ goals have been achieved in the belief state. If there doesn't exist a plan that can achieve the true goal with the given $k-1$ goals, we restart the search with next combination of $k-1$ goals among the total $n-1$ decoy goals. 

\paragraph{Heuristic function}

We now propose a heuristic strategy for generating a plan where the last belief state satisfies \textit{k} goals, making it a \textit{k-ambiguous} plan. We use two heuristic functions to estimate a node's cost: 
\begin{equation}
h(s) = \left( \textit{set-level}_{G_A}(s) + \textit{set-level}_{\mathcal{G}_{k-1}}(b) \right)
\end{equation}
where the first heuristic computes the \textit{setLevel} distance to the true goal from the agent's actual state, while the second heuristic computes the \textit{setLevel} distance to $k-1$ goals from the belief induced by the emitted observation. This is computed by finding the max of the minimum set-level distance from belief to each of the $k-1$ goals. The heuristic value of a state is computed as the total of these two values. The heuristic ensures at least $k$ goals occur in the last belief induced by the plan. 

\paragraph{Note on deterministic output and use of noop}

The \textit{k-ambiguous} algorithm maintains obfuscation assuming that the adversarial observer does not have access to the process of plan computation. If the observer has access to it then, a simple variation like the addition of random noise to the heuristic can thwart the attack.

Our formulation supports the use of noops for obfuscation. A noop action can be compiled to multiple noop actions, $noop = \{pretend_{o_{i}} | i = 1, \ldots, |\Omega| \}$ where each noop corresponds to an observation of a prior action. This would allow the agent to emit observations without doing any action. This way the agent can increase the chances of obfuscation while using the same approaches. 

\subsection{Computing Goal Legible Plans}

\paragraph{Goal test} 

In this case, we have to ensure that the computed plan is consistent with \textit{at most} $j$ true goals. To that end, we change our goal condition to additionally check whether at most $j-1$ confounding goals have been achieved in the last belief in $BS$, and whether at least $n-j$ goals are absent in the belief.

\paragraph{Heuristic function}

In this case, our objective is to avoid at least $n-j$ goals and at the same time be consistent with at most $j$ goals. We achieve this by minimizing the set-level distance to true goal from the agent's actual state and to the $j-1$ confounding goals from the agent's belief state. However, we maximize the set-level distance to other $n-j$ goals in order to achieve at most $j$ goals. This is written as,
\begin{equation}
h(s) = \left( \textit{set-level}_{G_A}(s) + \textit{set-level}_{\mathcal{G}_{j-1}}(b) - \textit{set-level}_{\mathcal{G}_{n-j}}(b) \right)
\end{equation}

\section{Plan Obfuscation and Plan Legibility}

We also discuss two other problem variants called plan obfuscation and plan legibility. These problems come into picture when the observer is aware of the agent's goal. However, the objective here is to either obfuscate or reveal the next steps in the plan. 

\subsection{Obfuscation}

We achieve plan obfuscation by computing a plan whose observation sequence conforms to a set of diverse plans, making it hard to predict the next action in that plan.

\begin{dfn} 
A \textbf{plan obfuscation planning problem} is a tuple, $\mathcal{P}_{PO} = \langle \mathcal{D}, \mathcal{G}_{PO}, \Omega, \mathcal{O} \rangle$, where, $\mathcal{G}_{PO} = \{ G_A \}$, and $G_A$ is the true goal of the agent. 
\end{dfn}

The solution to a plan obfuscation planning problem is an $\ell$\emph{-diverse} plan. An $\ell$\emph{-diverse} plan has an observation sequence that is consistent with $\ell$ plans that are at least $d$ distance away. In order to compute an $\ell$\emph{-diverse} plan, we need to keep track of the paths that are consistent with the belief sequence of the given plan, we call the set of these paths as belief plan set. 

\begin{dfn} A \textbf{belief plan set}, BPS(p, $s_0$) = $\{ p_1, \ldots, p_n \}$, induced by a plan $p$ starting at $s_0$, is a set of plans that are formed by causally consistent chaining of state sequences in $BS(p, s_0)$, i.e., BPS(p, $s_0$) = $\{ \langle \hat{s}_0, \hat{a}_1, \hat{s}_1, \ldots, \hat{s}_n \rangle ~|~ \exists~ \hat{a}_j, ~\hat{s}_{j-1} \models pre(\hat{a}_j) ~\wedge~ \hat{s}_{j-1} \in b_{j-1} ~\wedge~ \hat{s}_{j} \models \hat{s}_{j-1} \cup add(\hat{a}_j) \setminus delete(\hat{a}_j) ~\wedge~ \hat{s}_{j} \in b_{j} \}$.
\end{dfn}

Our aim is to compute the diversity between all the pairs of plans in $BPS(p, s_0)$. The diversity between plans can be enforced by using plan distance measures. 

\subsubsection{Plan Distance Measures}
We will utilize the three plan distance measures introduced in \citet{srivastava2007domain}, and refined in \citet{nguyen-partialp-2012}, namely action, causal link and state sequence distances. Our aim is to use these plan distance measures to measure the diversity of plans in a belief plan set. 

\paragraph{Action distance}

We denote the set of unique actions in a plan $\pi$ as $A(\pi) = \{a~|~a\in\pi\}$. Given the action sets $A(p_1)$ and $A(p_2)$ of two plans $p_1$ and $p_2$ respectively, the action distance is,
\begin{math} 
\delta_A(p_1, p_2) = 1 - \frac{\lvert A(p_1) \cap A(p_2) \rvert}{\lvert A(p_1) \cup A(p_2) \rvert}
\end{math}. 

\paragraph{Causal link distance}

A causal link represents a tuple of the form $\langle a_i, p_i, a_{i+1} \rangle$, where $p_{i}$ is a predicate that is produced as an effect of action $a_i$ and used as a precondition for $a_{i+1}$. The causal link distance for the causal link sets $Cl(p_1)$ and $Cl(p_2)$ of plans $p_1$ and $p_2$ is,
\begin{math} \label{eqn:eqn2}
\delta_C(p_1, p_2) = 1 - \frac{\lvert Cl(p_1) \cap Cl(p_2) \rvert}{\lvert Cl(p_1) \cup Cl(p_2) \rvert}
\end{math}. 

\paragraph{State sequence distance}

This distance measure takes the sequences of the states into consideration.
Given two state sequence sets $S(p_1) = (s^{p_1}_0, \ldots, s^{p_1}_n)$ and $S(p_2) = (s^{p_2}_0, \ldots, s^{p_3}_{n^{\prime}})$ for $p_1$ and $p_2$ respectively, where $n \geq n^{\prime}$ are the lengths of the plans, the state sequence distance is,
\begin{math} \label{eqn:eqn3}
\delta_S(p_1, p_2) = \frac{1}{n}  \big[ \  \sum_{k=1}^{n^{\prime}} d(s_k^{p_1}, s_k^{p_2}) + n - n^{\prime} \big] \ \
\end{math},
where $d(s_k^{p_1}, s_k^{p_2}) = 1 - \frac{\lvert s_k^{p_1} \cap s_k^{p_2} \rvert}{\lvert s_k^{p_1} \cup s_k^{p_2} \rvert}$ represents the distance between two states (where $s_k^{p_1}$ is overloaded to denote the set of fluents in state $s_k^{p_1}$).

\noindent We now formally define $\ell$\emph{-diverse} plan and other terms. 

\begin{dfn} Two plans, $p_1, p_2$, are a \textbf{d-distant pair} with respect to distance function $\delta$ if, $\delta(p_1, p_2) = d$, where $\delta$ is a diversity measure. 
\end{dfn}


\begin{dfn} A BPS induced by plan p starting at $s_0$ is \textbf{minimally d-distant}, $d_{min}(BPS(p, s_0))$, if $\displaystyle d = \min_{p1, p2 \in BPS(p, s_0)} \delta(p1, p2)$.
\end{dfn}


\begin{dfn} A plan, $\pi_l$, is an \textbf{$\bm{\ell}$-diverse plan}, if for a given value of d and distance function $\delta$, $d_{min}(BPS(\pi_l, \mathcal{I})) \geq d$, $|BPS(\pi_l, \mathcal{I})| \geq \ell$, where $\ell \geq 2$ and every plan in $BPS(\pi_l, \mathcal{I})$ achieves the goal in $\mathcal{G}_{PO}$.
\end{dfn}


\subsubsection{Computing Obfuscated Plans}

Here we return a plan that is at least $\ell$\textit{-diverse} and that maximizes the plan distance between \textit{BPS} induced by a plan.

\paragraph{Goal test} 

To ensure the plans in $BPS$ induced by $\ell$\textit{-diverse} plan can achieve the goal in $\mathcal{G_{PO}}$, we change the goal condition to additionally check whether at least $\ell$ plans are reaching the goal or not. Also in order to ensure termination of the algorithm, there is a cost-bound given as input to the algorithm.

\paragraph{Heuristic function}

We now present our heuristic strategy to compute $\ell$\emph{-diverse} observation sequence. Our heuristic is a three-part function: 
\begin{equation}
h(s) = \left(- d_{min}(BPS(p, s_0)), -|BPS(p, s_0)|, \textit{set-level}(s) \right) 
\end{equation}
where the primary heuristic maximizes the $d$ of $d_{min}(BPS(p, s_0))$ induced by plan $p$ starting at $s_0$, the second heuristic maximizes the cardinality of the set $BPS(p, s_0)$, while the third heuristic gives the set-level value of $s$. The cardinality of $BPS(p, s_0)$ is computed by counting the number of plans that have the same set-level value from their last states as that from the true state. The distance is computed using either of the three plan distances.  

\subsection{Plan Legibility}

The definition of plan legibility planning problem is similar to that of plan obfuscation planning problem. Here the objective is to reveal the next action in the plan. We achieve this by making the observation sequence consistent with plans that are similar in terms of observation sequences. This case is opposite to that of the plan obfuscation case. We use the same three plan distance measures to define the similarity of the plans. We call the solution to this problem as an \emph{m-similar} plan. For similar plans in $BPS$, we define the maximum distance between any two pairs of plans.

\begin{dfn}
A BPS induced by plan p starting at $s_0$ is \textbf{maximally d-distant}, $d_{max}(BPS(p, s_0))$, if $d = \displaystyle \max_{p1, p2 \in BPS(p, s_0)} \delta(p1, p2)$.
\end{dfn}

\begin{dfn} A plan, $\pi_m$, is a \textbf{m-similar plan}, if for a given value of d and distance function $\delta$, $d_{max}(BPS(\pi_m, \mathcal{I})) \leq d$, $|BPS(\pi_m, \mathcal{I})| \geq m$, where $m \geq 2$ and every plan in $BPS(\pi_m, \mathcal{I})$ achieves the goal in $\mathcal{G}_{PL}$.
\end{dfn}


\subsubsection{Computing Legible Plans}

Here we return a plan that is at least $m$\textit{-similar} and that minimizes the plan distance between \textit{BPS} induced by a plan. The goal test for this case is similar to that of plan obfuscation case. 

\paragraph{Heuristic function}

The heuristic function to compute $m$\emph{-similar} observation sequence is a three-part function: 
\begin{equation}
h(s) = \left(d_{max}(BPS(p, s_0)), -|BPS(p, s_0)|, \textit{set-level}(s) \right) 
\end{equation}
where the primary heuristic minimizes the $d$ value of $d_{max}(BPS(p, s_0))$ induced by plan $p$ starting at $s_0$, the remaining parts are same as those in plan obfuscation case.

\section{Robot Demonstration}

We now present a demonstration of \textit{k-ambiguity} and \textit{j-legibility} using a physical Fetch robot in a cooking domain. Here robot plays the role of an actor and a human plays the role of an observer. The goal of the robot is to add necessary ingredients to make a recipe. In the obfuscation setting, the robot's aim is to hide the ingredients from the human. This can be of use, especially, in scenarios where there is a need to preserve the secrecy of a recipe due to proprietary reasons. In contrast, in the legibility setting, the robot's aim is to inform the human about the ingredients being added. This could be seen from a household user's perspective, where the user needs to know what is being used in the recipe. 

The domain consists of actions that allow the robot to communicate with the human, as well as pick-up and pour ingredients into a container. Each action can be performed in either an obfuscated or a legible way with different effects and different observations. We provide the action descriptions in Figures \ref{fig:desc1} and \ref{fig:desc2}. From Figure \ref{fig:desc1}, it can be seen that the action \texttt{ask-human-to-stir} distracts the human and allows the robot to obfuscate its goal. When the human is inattentive, the observation emitted for picking up any ingredient is generic. On the other hand, in Figure \ref{fig:desc2} the action \texttt{ask-human-to-pass-sugar} brings the human to attention. When the human is attentive, the observation emitted for picking up a container is specific for each ingredient. As shown in Figure \ref{fig:scenario}, the scenario consists of a salt container and two sugar containers: one labeled as sugar, another without a label. The human is not aware of the contents of the unlabeled container. 

\begin{figure}
\centering
\includegraphics[width=0.8\columnwidth]{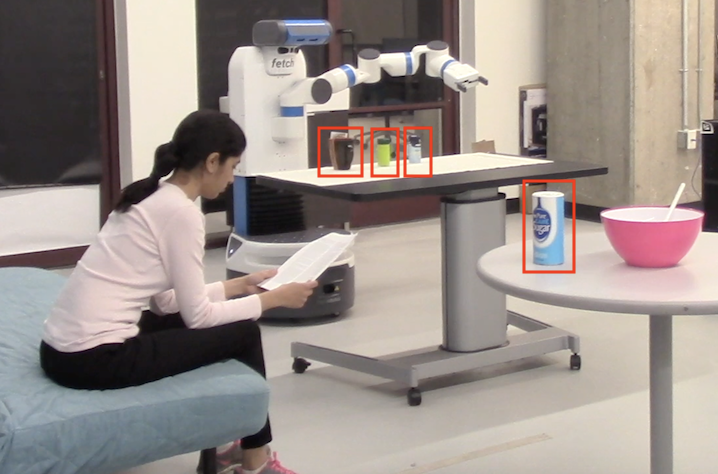}
\caption{An illustration of the cooking domain. Here the robot is following a recipe and has to pick up and pour the ingredients into a cup. The robot has two containers available on its table, an unlabeled green colored container that contains sugar (unbeknownst to the human) and a labeled salt container. A labeled sugar container is available on the second table.}
\label{fig:scenario}
\end{figure}

\begin{figure}[t]
\centering
\begin{tabular}{|l|}
\hline
(:action ask-human-to-stir\\
\tab :parameters (?x - ingredient ?y - container)\\
\tab :precondition (and \\
\tab \tab (in ?x ?y) \\
\tab \tab(not (obfuscated ?y)) \\
\tab ) \\
\tab :effect (and \\
\tab \tab (human-inattentive) \\
\tab \tab (increase (total-cost) 1) \\
\tab ) \\
 ) \\
\\
(:action pickup-container-obfuscated \\
\tab :parameters (?x - ingredient ?y - container \\
\tab ?z - table) \\
\tab :precondition (and \\
\tab \tab (in ?x ?y) \\
\tab \tab (handempty) \\
\tab \tab (on ?y ?z) \\
\tab \tab (accessible ?z) \\
\tab \tab (human-inattentive) \\
\tab ) \\
\tab :effect (and \\
\tab \tab (not (handempty)) \\
\tab \tab (holding ?y) \\
\tab \tab (not (on ?y ?z)) \\
\tab \tab (obfuscated ?y) \\
\tab \tab (increase (total-cost) 1) \\
\tab ) \\
) \\
\hline
\end{tabular}
\caption{Action description of obfuscating actions}
\label{fig:desc1}
\end{figure}

\begin{figure}[t]
\centering
\begin{tabular}{|l|}
\hline
(:action ask-human-to-pass-sugar \\
\tab :parameters (?x - ingredient ?y - container \\ \tab ?z - table) \\
\tab :precondition (and \\
\tab \tab (in ?x ?y) \\
\tab \tab (is-type-labeled ?y) \\
\tab \tab (accessible ?z) \\
\tab \tab (not (legible ?y)) \\
\tab ) \\
\tab :effect (and  \\
\tab \tab (on ?y ?z)  \\
\tab \tab (human-attentive) \\
\tab \tab (increase (total-cost) 1) \\
\tab ) \\
) \\
\\
    
(:action pickup-container-legible \\
\tab :parameters (?x - ingredient ?y - container \\
\tab ?z - table) \\
\tab :precondition (and \\
\tab \tab (in ?x ?y) \\
\tab \tab (handempty) \\
\tab \tab (on ?y ?z) \\
\tab \tab (accessible ?z) \\
\tab \tab (is-type-labeled ?y) \\
\tab \tab (human-attentive) \\
\tab ) \\
\tab :effect (and \\
\tab \tab (not (handempty)) \\
\tab \tab (holding ?y) \\
\tab \tab (not (on ?y ?z)) \\
\tab \tab (legible ?y) \\
\tab \tab (increase (total-cost) 1) \\
\tab ) \\
) \\
\hline
\end{tabular}
\caption{Action description of legible actions}
\label{fig:desc2}
\end{figure}

We used MoveIt \cite{sucan2013moveit} software with \texttt{RRTConnectkConfigDefault} planner to perform motion planning. RViz was used to visualize the objects in the scene and the possible trajectories of the robot. AR markers were used to localize the objects in the scene. The candidate goal set along with \textit{k-ambiguous} and \textit{j-legible} plans are given in Table \ref{table:robot}. Here $k=2$ and $j=1$. The videos for both obfuscation and legibility can be viewed at \protect{\url{https://youtu.be/mW7qCeggpUA}} and \protect{\url{https://youtu.be/qzvTivKuszU}}.

\setlength{\tabcolsep}{4pt}
\renewcommand{\arraystretch}{1.1}
\begin{table}
\caption{Obfuscated and legible plans for cooking domain}
\resizebox{\columnwidth}{!}{%
\begin{tabular}{ |l|l| }
\hline
True Goal & contents-added cup1 sugar \\
\hline
Decoy / Confounding & contents-added cup1 salt  \\
\hline
Obfuscated Plan & ask-human-to-stir sugar container1, 
 \\
 & pickup-container-obfuscated sugar container1 table1, \\
 & pour-ingredient-and-place sugar container1 cup1 table1 \\
\hline
Obfuscation observation & human-inattentive, pickup-ingredient, pour-ingredient \\
\hline
Legible Plan & ask-human-to-pass-sugar sugar container2 table1,  \\
& pickup-container-legible sugar container2 table1,  \\
& pour-ingredient-and-place sugar container2 cup1 table1 \\
\hline
Legible observation & human-attentive, pickup-sugar, pour-sugar\\
\hline
\end{tabular}%
}
\label{table:robot}
\end{table}

\section{Empirical Evaluation}
\label{sec:evaluation}
We now present an empirical analysis of all four approaches to demonstrate the performance and scope of our system. To this end, we use the following three IPC domains for our experiments: 15 problems from \texttt{BlocksWorld}, 10 from \texttt{Storage}, 10 from \texttt{Gripper}. We use grounded domains and problems in our experiments. 

\subsection{Observation Model Setup}

We now discuss the design of $\mathcal{O}$ for the following domains:

\subsubsection{BlocksWorld}  
For blocksworld, the observation model is same as the one defined in Example \ref{example:1}. For the 15 problems used in this domain, the smallest problems had 26 fluents, 34 actions and the largest problems had 39 fluents, 74 actions.  


\subsubsection{Storage} The storage domain has actions \texttt{move}, \texttt{drop}, \texttt{lift}, \texttt{go-out} and \texttt{go-in}. The observer cannot distinguish between lift and drop actions but receives an observation saying whether the hoist was used to perform an action. The observer can tell whether the agent is inside a particular storage area or outside. However, once the agent is inside a store area, the move actions do not reveal the agent's area. Therefore all move actions are of the same type. For the 10  test problems, the smallest problem had 12 fluents, 10 actions and the largest problem had 43 fluents, 78 actions.  

\subsubsection{Gripper} The gripper domain has actions \texttt{move}, \texttt{drop} and \texttt{pickup}. The observer gets observation when the agent moves from one room to another. Also, the observer gets an observation regarding whether the gripper is holding something or not. Therefore in this domain, the observer can distinguish between all types of actions. However, the observer is not aware of the exact location of the agent. For the 10 problems used in this domain, the smallest problem had 21 fluents, 30 actions and the largest problem had 40 fluents, 96 actions after grounding. 

\setlength{\tabcolsep}{5pt}
\renewcommand{\arraystretch}{1.1}
\begin{table}[!t]
\resizebox{\columnwidth}{!}{%
\begin{tabular}{ |l|c|c|c|c|c|c| }
\hline
\multirow{2}{*}{Domain} & \multirow{2}{*}{\small{Metrics}} & \multirow{2}{*}{\textit{k-amb}} & $\ell$\textit{-div} & $\ell$\textit{-div}& $\ell$\textit{-div}  \\
&& & (action) &(causal)& (state) \\
\hline
\hline
\multirow{3}{*}{Blocksworld} & avg time & 32.20 & 123.41 & 174.06 & 571.03  \\ \cline{2-6}
& sd time & 82.15 & 155.72 & 210.49 & 169.37 \\ \cline{2-6}
& $|O|$ & 9.33 & 7.71 & 6.85 & 7.11 \\
\hline
\multirow{3}{*}{Storage} &avg time & 37.72 & 88.11 & 212.49 & 227.58 \\ \cline{2-6}
& sd time & 35.80 & 90.38 & 374.14 & 250.79 \\ \cline{2-6}
& $|O|$ &7.83 & 6.75 & 5.83 & 5.66 \\
\hline
\multirow{3}{*}{Gripper} & avg time & 56.49 & 175.56 & 592.94 & 149.63 \\ \cline{2-6}
&sd time & 118.64 & 52.41 & 197.61 & 48.87 \\ \cline{2-6}
&$|O|$ & 6.88 & 4.3 & 5.12 & 4.55 \\
\hline
\end{tabular}%
}
\caption{Empirical evaluation for goal obfuscation and plan obfuscation. We report average, standard deviation of time taken in seconds and the average observation sequence length of the obfuscated plans.}
\label{table:eval1}
\end{table}

\setlength{\tabcolsep}{5pt}
\renewcommand{\arraystretch}{1.1}
\begin{table}[!t]
\resizebox{\columnwidth}{!}{%
\begin{tabular}{ |l|c|c|c|c|c|c|c| }
\hline
\multirow{2}{*}{Domain} & \multirow{2}{*}{\small{Metrics}} & \multirow{2}{*}{\textit{j-leg}} & \textit{m-sim} & \textit{m-sim}& \textit{m-sim} \\
&& & (action) &(causal)& (state) \\
\hline
\hline
\multirow{3}{*}{Blocksworld} & avg time & 204.12 & 59.63 & 73.56& 81.07\\ \cline{2-6}
& sd time & 155.04 &73.21 &88.03& 127.62\\ \cline{2-6}
& $|O|$ & 6.9 &6.93 &7.14 & 6.85\\
\hline
\multirow{3}{*}{Storage} &avg time & 14.21 &36.34& 31.97& 38.79\\ \cline{2-6}
& sd time & 15.65 &41.52&27.50& 52.09\\ \cline{2-6}
& $|O|$ &5.27 &9.8&9.66& 10.12 \\
\hline
\multirow{3}{*}{Gripper} & avg time & 383.17 & 329.37& 314.62&349.66 \\ \cline{2-6}
&sd time & 178.14 & 131.70 & 112.64& 159.65\\ \cline{2-6}
&$|O|$ & 6.75 & 7.34 &8.62& 8.33\\
\hline
\end{tabular}%
}
\caption{Empirical evaluation for goal legibility and plan legibility. We report average, standard deviation of time taken in seconds and the average observation sequence length of the legible plans.}
\label{table:eval2}
\end{table}

\subsection{Results}

We provide evaluation of our approaches in Table \ref{table:eval1} and \ref{table:eval2}. We wrote new planners from scratch for each of the the algorithms presented. We ran our experiments on Intel(R) Xeon(R) CPU E5-2643v3, with a time out of 30 minutes. We created the planning problems in a randomized fashion. We report the performance of our approaches in terms of average and standard deviation for the time taken to run the problems in the given domain, and the average length of the observation sequence. For all the problems, the values used were $k = 5$, $\ell = 3$, $j = 3$ with $n-j = 2$, $m =3$, $d_{min} = 0.25$ and $d_{max} = 0.50$. 

\setlength{\tabcolsep}{4pt}
\renewcommand{\arraystretch}{1.1}
\begin{table*}[!thp]
\resizebox{\textwidth}{!}{%
\begin{tabular}{ |l|l|l| }
\hline
 Algo, $\mathcal{O} $ & Plan & Observation Sequence  \\
\hline
\hline
FD,\{$\mathcal{O}_1, \mathcal{O}_2\}$ & unstack-B-C, putdown-B, unstack-C-A, putdown-C, unstack-A-D, stack-A-B  & unstack, putdown, unstack, putdown, unstack, stack \\
\hline
\multirow{2}{*}{k-amb, $\mathcal{O}_1$} & unstack-B-C, putdown-B, unstack-C-A, putdown-C, unstack-A-D, stack-A-B, & unstack, putdown, unstack, putdown, unstack, stack, pickup,  putdown, \\
& pickup-C, putdown-C, pickup-D, putdown-D, pickup-C, stack-C-D &  pickup, putdown, pickup, stack \\
\hline
\multirow{2}{*}{k-amb, $\mathcal{O}_2$} & unstack-B-C, putdown-B, unstack-C-A, putdown-C, unstack-A-D, stack-A-B,  & unstack-B, putdown-B, unstack-C, putdown-C, unstack-A, stack-A,  \\
&unstack-A-B, putdown-A, pickup-B, stack-B-C, pickup-A, stack-A-B & unstack-A, putdown-A, pickup-B, stack-B, pickup-A, stack-A \\
\hline
\multirow{2}{*}{$\ell$-div, $\mathcal{O}_1$} & unstack-B-C, putdown-B, unstack-C-A, stack-C-B, unstack-C-B, putdown-C, & \multirow{2}{*}{unstack, putdown, unstack, stack, unstack, putdown, unstack, stack}  \\
&  unstack-A-D, stack-A-B & \\
\hline
\multirow{2}{*}{$\ell$-div, $\mathcal{O}_2$} & unstack-B-C, putdown-B, unstack-C-A, stack-C-B, unstack-C-B, putdown-C, & unstack-B, putdown-B, unstack-C, stack-C, unstack-C, putdown-C,  \\
& unstack-A-D, stack-A-B & unstack-A, stack-A\\
\hline
\multirow{2}{*}{j-leg, $\mathcal{O}_1$}  & unstack-B-C, putdown-B, unstack-C-A, putdown-C, pickup-B, stack-B-C,  & \multirow{2}{*}{unstack, putdown, unstack, putdown, pickup, stack, unstack, stack} \\
& unstack-A-D, stack-A-B &  \\
\hline
\multirow{2}{*}{j-leg, $\mathcal{O}_2$}  & unstack-B-C, putdown-B, unstack-C-A, putdown-C, pickup-B, stack-B-C,  & unstack-B, putdown-B, unstack-C, putdown-C,  pickup-B, stack-B,  \\
& unstack-A-D, stack-A-B & unstack-A, stack-A\\
\hline
\multirow{2}{*}{m-sim, $\mathcal{O}_1$} & unstack-B-C, putdown-B, unstack-C-A, putdown-C, unstack-A-D, & \multirow{2}{*}{unstack, putdown, unstack, putdown, unstack, stack}  \\
&  stack-A-B & \\
\hline
\multirow{2}{*}{m-sim, $\mathcal{O}_2$} & unstack-B-C, putdown-B, unstack-C-A, putdown-C, unstack-A-D, putdown-A & unstack-B, putdown-B, unstack-C, putdown-C, unstack-A, putdown-A,  \\
& pickup-A, stack-A-B & pickup-A, stack-A\\
\hline
\end{tabular}%
}
\caption{Examples of plans generated for two different observation models}
\label{table:eval3}
\end{table*}


For all the domains, we note that the average length of \textit{k-ambiguous} observation sequences is longer than that of \textit{j-legible} observation sequences as obfuscated plans involve incorporating ``time-wasting" actions. The computation of legible plans, in general, is a challenging task, as it is difficult to inform an observer about an agent's goals without any explicit communication. Moreover, the \textit{j-legible} case limits the number of conforming goals to at most $j$. 
For \textit{j-legible} case, $3/15$ problems in blocksworld and $1/10$ problems in gripper were unsolvable. We also note that the average observation length for all three distance measures is different, thus showing that they produce different plans. For $\ell$\textit{-diverse} and \textit{m-similar} case, $2/10$ problems in storage domain, were unsolvable for all three distance measures.  


\subsection{Example}

We illustrate the plans generated by our algorithms along with a default optimal plan generated by FD for two different observation models of \texttt{BlocksWorld}. $\mathcal{O}_1$ is same as the one discussed above with $|\Omega| = 4$. For $\mathcal{O}_2$, let's assume the agent can identify the block in its hand, then $|\Omega| = 16$. The initial state is, \textit{on-B-C, on-C-A, on-A-D, ontable-D, clear-B, handempty}. The true goal is \textit{on-A-B} and the decoy/confounding goals are \textit{on-B-C, on-D-C}. We compute plans for $k=3$, $\ell=2$ with action distance and $j=2$ in both the observation models. These plans along with accompanying observation sequences are given in Table \ref{table:eval3}. For the \textit{j-legible} case, $j$ consists of \textit{on-B-C} and true goal \textit{on-A-B}, while $n-j$ consists of \textit{on-D-C}. 
We note that the obfuscated plans consist of ``time-wasting" actions as can be seen in \textit{k-amb} and \textit{$\ell$-div} examples. We can see that the \textit{k-amb} and \textit{m-sim} plans are affected by the observation model difference. However, the other two are not affected, this is mostly because this is a small problem for illustrative purposes. In general, plan computation particularly depends on the observation model. 

\section{Related Work}

There are prior works which discuss the problem of privacy preservation in distributed multi-agent systems \cite{brafman2015privacy,luis2014plan,bonisoli2014privacy}. A recent work on privacy for multi-agents of \citet{maliah2016stronger} is complementary to our approach, as they consider problems where the model needs to be protected from the team members but goals and behavior are coordinated. In contrast, we consider problems where the models are public but goals and behavior need to be protected.  

The problem of goal obfuscation is also related to plan recognition literature \cite{ramirez2009plan,ramirez2010plan,yolanda2015fast,sohrabi2016plan,keren2016goal}. Traditional plan recognition systems have focused on scenarios where actions being executed can be observed directly. In our case, observational equivalence due to the many-to-one formulation of $\mathcal{O}$ introduces, in effect, noisy action-state observations. This, in turn, complicates plan recognition. More crucially, the agent uses the observational equivalence to actively help or hinder the ease of plan recognition. 

There are a few recent works which have explored the idea of obfuscation in adversarial settings from the goal recognition aspect \cite{keren2016privacy,ijcai2017-610}. One of the closely related work is that of \citet{keren2016privacy} on privacy preservation, in which the authors propose a solution that obfuscates a goal by choosing one of the candidate goals that has the maximum non-distinct path in common with the true goal, which obfuscates part of the plan. In contrast, our plans are obfuscated for the entire length such that, at least $k$ goals are consistent with the observations. Also, our framework supports the case of plan obfuscation which prevents the next step from being deciphered by making it consistent with $\ell$ diverse plans, and the case of a cooperative observer which make the agent's intentions legible to the observer by being consistent with at most $j$ goals.

The notions of k-anonymity \cite{sweeney2002k} and l-diversity \cite{machanavajjhala2006diversity} were originally developed in the literature on privacy and security for relational databases. In motion planning and robotics community, legibility \cite{Dragan-RSS-13,knepper2017implicit} has been a well-studied topic. However, this has been mostly looked at from the motion planning perspective, and therefore the focus has been on optimizing the motion trajectories such that the goal is revealed. We borrow these notions and generalize it in a unified framework to provide obfuscated and legible plans from a task planning perspective.

\subsection{Compilation to Model Uncertainty}

In recent years, there has been some interesting research in the field of human aware planning. Especially the work on explainable AI and explanations \cite{fox2017explainable,exp-yz,explain} proposes modeling the human's understanding of a planning agent and introduces the notion of human-aware multi-model planning. Their framework consists of two models representing the planner's domain model and the observing or interacting human's understanding of the planning model. This setting captures the uncertainty of the observer in the form of human's partial or incorrect model of the agent. On the other hand, our setting also explores uncertainty of the observer's understanding of the plans computed by the planner. However, we capture the uncertainty in form of a partial observation model. We hypothesize that the two settings can be compiled from one formulation to another, and can be perceived as primal and dual problems. We intend to investigate this direction in future work.


\section{Conclusion}

We introduced a unified framework that gives a planner the capability of addressing both adversarial and cooperative situations. Our setting assumes that the observer has partial visibility of the agent's actions, but is aware of agent's planning capabilities. We define four problems: goal obfuscation and goal legibility when the agent's true goal is unknown and, plan obfuscation and plan legibility when the agent's true goal is known. We propose the following solutions to these problems: \emph{k-ambiguous} plan which obfuscates the true goal with respect to at least $k$ goals, \emph{j-legible} plan which enables an observer to quickly understand the $j$ true goals of the agent, $\ell$\emph{-diverse} plan which obfuscates the next actions in a plan and, \emph{m-similar} plan which reveals the next actions in the plan. We present different search techniques to achieve these solutions and evaluate the performance of our approaches using three IPC domains: \texttt{BlocksWorld}, \texttt{Storage} and \texttt{Gripper}. We also demonstrate the goal obfuscation and goal legibility problems using the Fetch robot in a cooking domain.

\section*{Acknowledgments}

This research is supported in part by the AFOSR grant FA9550-18-1-0067, the ONR grants N00014-16-1-2892, N00014-13-1-0176, N00014-13- 1-0519, N00014-15-1-2027, N00014-18-1-2442 and the NASA grant NNX17AD06G. 

\bibliographystyle{aaai}
\bibliography{bib}

\end{document}